\documentclass[conference]{IEEEtran}
\usepackage{amsmath,amsfonts,amssymb,amsthm}
\usepackage{times}
\usepackage{comment}
\usepackage{xcolor}
\usepackage{graphicx}
\usepackage{subfigure}
\usepackage{multirow}
\usepackage[numbers]{natbib}
\usepackage{multicol}
\usepackage[bookmarks=true]{hyperref}
\usepackage{makecell} 
\usepackage{dsfont}

\usepackage{floatrow}
\usepackage[linesnumbered,ruled]{algorithm2e}
\usepackage{txfonts}
\newcommand{\bF}{\boldsymbol{F}}
\newcommand{\br}{\boldsymbol{r}}
\newcommand{\bxi}{\boldsymbol{\xi}}
\newcommand{\bA}{\boldsymbol{A}}
\newtheorem{thm}{Theorem}[]

\makeatletter
\newcommand{\printfnsymbol}[1]{%
  \textsuperscript{\@fnsymbol{#1}}%
}
\makeatother

\begin{document}

\title{Agile asymmetric multi-legged locomotion: contact planning via geometric mechanics and spin model duality}

\author{
\IEEEauthorblockN{Jackson Habala$^1$,
Gabriel B. Margolis$^2$,
Tianyu Wang$^3$,
Pratyush Bhatt$^1$,
Juntao He$^3$,
Naheel Naeem$^1$,\\
Zhaochen Xu$^3$,
Pulkit Agrawal$^{2}$,
Daniel I. Goldman$^3$,
Di Luo$^{4,*}$,
Baxi Chong$^{1,*}$
}
\IEEEauthorblockA{$^1$Department of Mechanical Engineering, Pennsylvania State University, University Park, PA 16802, USA\\
$^2$MIT Improbable AI Lab, Maschusetts Institute of Technology, Cambridge, MA 02139, USA\\
$^3$Department of Physics, Georgia Institute of Technology, Atlanta, GA 30332, USA\\
$^4$Department of Physics, Tsinghua University, Beijing 10084, P.R. China\\
}
}

\maketitle

\begingroup
\renewcommand{\thefootnote}{\fnsymbol{footnote}}
\footnotetext[1]{Corresponding authors.}
\endgroup

\begin{abstract} 

Legged robot research is presently focused primarily on bipedal or quadrupedal robots, despite longstanding capabilities to build robots with many more legs to potentially improve locomotion performance. This imbalance is not necessarily due to hardware limitations, but rather to the absence of principled control frameworks that explain when and how additional legs improve locomotion performance. In multi-legged systems, coordinating many (redundant) simultaneous contacts introduces a severe curse of dimensionality that challenges existing modeling and control approaches. As an alternative, multi-legged robots are typically controlled using low-dimensional gaits originally developed for bipeds or quadrupeds. These strategies fail to exploit the new symmetries and control opportunities that emerge in higher-dimensional systems, leading to only marginal performance gains despite substantially increased morphological complexity. In this work, we develop a principled framework for discovering new control structures in multi-legged locomotion. We use geometric mechanics to reduce contact-rich locomotion planning to a graph optimization problem, and propose a spin model duality framework from statistical mechanics to exploit symmetry breaking and guide optimal gait reorganization. Using this approach, we identify an asymmetric locomotion strategy for a hexapod robot that achieves a forward speed of 0.61 body lengths per cycle (a 50\% improvement over conventional gaits). The resulting asymmetry appears at both the control and hardware levels. At the control level, the body orientation oscillates asymmetrically between fast clockwise and slow counterclockwise turning phases for forward locomotion. At the hardware level, two legs on the same side remain unactuated and can be replaced with rigid parts without degrading performance. Numerical simulations and robophysical experiments validate the framework and reveal novel locomotion behaviors that emerge from symmetry reforming in high-dimensional embodied systems.
\end{abstract}

\IEEEpeerreviewmaketitle

\section{Introduction}

\begin{figure}[t]
\centering
\includegraphics[width=1\linewidth]{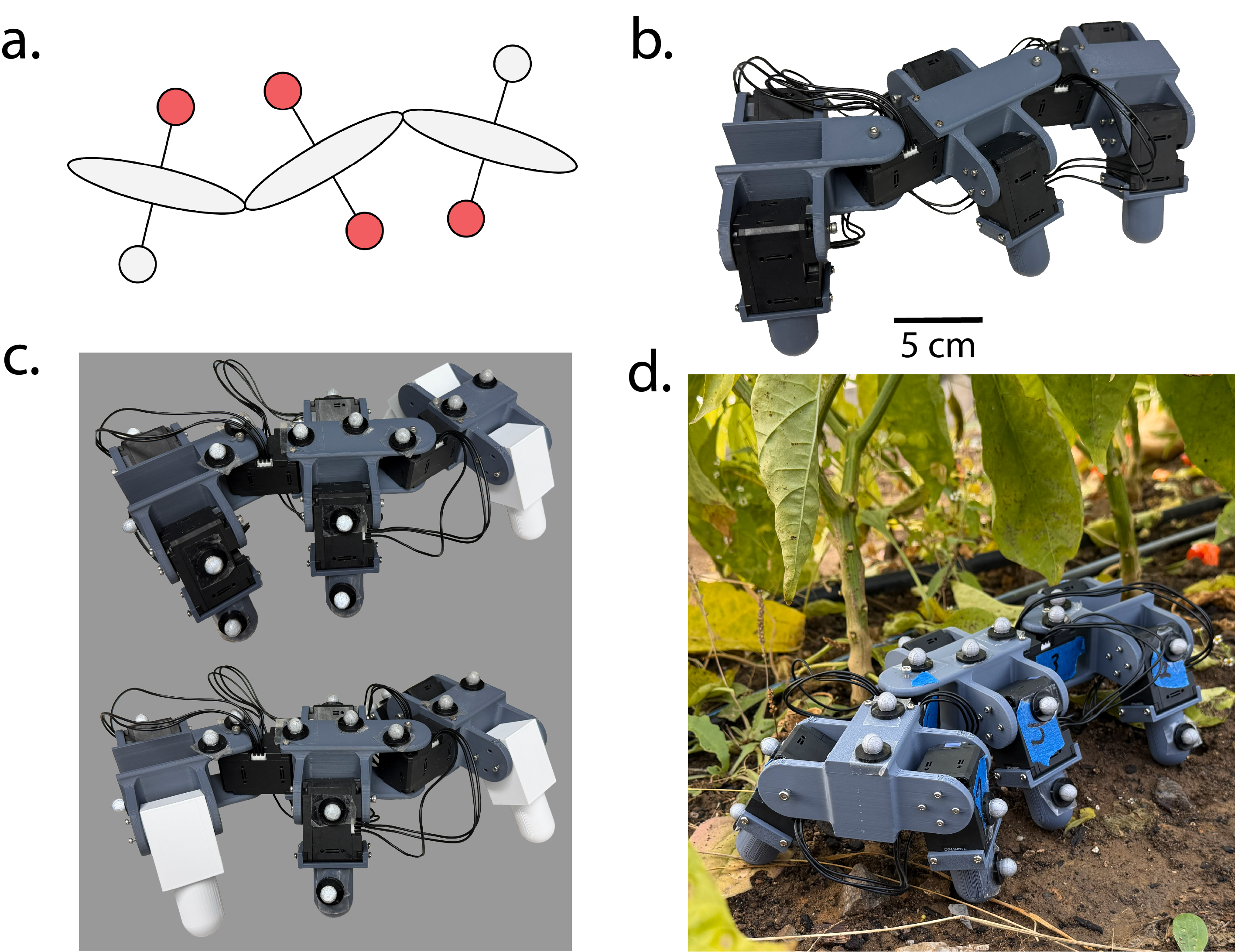}
{\caption{\textbf{Multi-legged robot: a hexapod model.} (a) 
Geometric and (b) robophysical hexapod model. (c) Two physics-informed alternative designs in which two legs are replaced by non-actuated rigid appendages, achieving forward locomotion performance identical to the fully actuated hexapod. (d) Field deployment under a tomato plant.
}
}
\label{fig:introRobot}
\end{figure}

\vspace{-0.5em}

As the cost of motors and sensors continues to decline, building robots with redundant actuation has become increasingly feasible~\cite{verstraten2019kinematically}. However, feasibility alone does not establish necessity. While adding actuators can intuitively improve stability and robustness, similar benefits can often be achieved through advances in perception and control. Indeed, recent progress has demonstrated highly capable bipedal and quadrupedal locomotion using data-driven methods~\cite{margolis2024rapid,choi2023learning,li2025reinforcement} and model-based control algorithms~\cite{grandia2023perceptive,dantec2022whole}. Despite the prevalence of hexapods in biology (more than half of all known species possess 6 legs~\cite{mayhew2007there}),  robotic research in academia and industry remains largely focused on few-legged systems~\cite{lee2020learning}. This disparity raises a fundamental question: Given the demonstrated success of few-legged robots, what functional advantages, if any, do additional legs provide?

We attribute our limited understanding of multi-legged robots to the lack of scalable tools to coordinate and control these systems. Simply adding legs does not guarantee improved locomotion; capturing their benefits requires controllers that effectively coordinate the additional degrees of freedom~\cite{freyberg2023morphological, full1999templates}. Multi-legged locomotion entails coordinating many appendages that make and break contact with the environment, giving rise to contact-rich dynamics~\cite{chong2022self}. The number of possible contact sequences grows rapidly with leg count, leading to an explosive increase in coordination complexity. For example, a hexapod can employ on the order of $10^{89}$ distinct contact sequences (see SI, Sec.~S1), making exhaustive planning or control impractical. Consequently, most existing approaches to hexapod locomotion rely on gait principles adapted from bipedal or quadrupedal robots, which limit our exploration of multi-legged functionality~\cite{Saranli, coelho2021trends}.

Gait principles developed for bipedal and quadrupedal robots typically rely on strict assumptions of symmetry that are essential for few-legged locomotion~\cite{full1999templates, hildebrand1967symmetrical, hildebrand1965symmetrical, Alexander}. The most common is contralateral symmetry, in which left and right legs on the same body segment perform identical motions subject to a phase offset. Under this assumption, whole-body yaw moments generated from the left and right sides are balanced, resulting in straight-line motion. While contralateral symmetry may be necessary for achieving straight-line locomotion in few-legged systems, multi-legged robots, such as hexapods, contain a richer set of symmetries. For example, diagonal and middle-leg reflection symmetries can also generate net forward motion without relying on strict left–right pairing~\cite{shrivastava2020material}. Existing control methods have largely failed to exploit these additional symmetries~\cite{10801714,kim2026deep,he2024learning}, leaving the benefits of deliberately breaking contralateral symmetry unexplored.

In this paper, we develop a physics-informed framework to answer this question: What benefits arise from the new symmetries available in redundant leg systems? We choose a hexapod robot with two body-bending joints as a representative system. To break down the large contact space of the hexapod, we employ geometric mechanics (a framework rooted in differential geometry and geometric phase) to transform the contact-planning problem of multiple legs into a graph optimization problem. Leveraging spatiotemporal symmetries in locomotion, we further map this graph optimization problem to a special class of spin models, which allow us to obtain globally optimal gaits in polynomial time. Using this framework, we identify gait patterns that generate net forward motion through asymmetric left–right leg coordination. The observed gaits produce short-and-rapid clockwise rotation paired with long-and-gradual counterclockwise rotation to achieve straight-line forward locomotion. Experiments show that the resulting gaits achieve speeds up to 0.61 body lengths per cycle (BL/cyc), outperforming conventional approaches, including extended quadrupedal gaits, reinforcement learning–based controllers, and bio-inspired gaits. Additionally, the observed gaits allow for two legs to be replaced with rigid parts without reducing locomotion performance (Fig.~\ref{fig:introRobot}(c)). More broadly, our approach provides a principled tool to analyze multi-legged locomotion without relying on ad hoc dimensionality reduction, and offers a quantitative framework for revealing when and why additional legs fundamentally enhance locomotion performance.

\vspace{-0.5em}
\section{Background}
\vspace{-0.25em}
Consider a regime of locomotion in which inertial forces are negligible compared with frictional forces. We refer to this regime as \emph{geometric locomotion}. Prior work has shown that the relative importance of inertia can be quantified by the \emph{coasting number} $\mathcal{C}$, defined as the ratio between the coasting time (the time required for a locomotor to come to rest after self-deformation ceases) and a characteristic locomotion timescale (typically the gait cycle period)~\cite{rieser2024geometric}. In the low-coasting-number regime ($\mathcal{C} \ll 1$), inertial effects can be safely neglected and locomotion is well described by geometric models. This regime encompasses a broad class of biological and robotic systems, including centipede and snake locomotion, and it remains applicable even at speeds of multiple body lengths per second~\cite{rieser2021functional}.

Low-coasting-number locomotion shares a key property with low-Reynolds-number fluid locomotion~\cite{purcell2014life}: the net displacement over a gait cycle is determined solely by the \emph{sequence} of shape changes, not by the rate at which those shape changes are executed. Consequently, the system dynamics depend only on the evolution of body shapes and not on absolute time scaling. Under this approximation, the equations of motion~\cite{Marsden} reduce to

\vspace{-1.2em}
\begin{equation}\label{eq:EquationOfMotion1}
    \bxi={\bA(\br)\dot \br},
\end{equation}
\vspace{-1.5em}

\noindent where $\bxi=[\xi_x, \xi_y, \xi_\theta]^T$ denotes the body velocity in the forward, lateral, and rotational directions (Fig.~\ref{fig:shapespace}(a)); ${\br}$ denotes the internal shape variables (joint angles, e.g., $\br=[\alpha_1,\ \alpha_2]$ in Fig.~\ref{fig:shapespace}); $\bA(\br)$ is the local connection matrix, which encodes environmental constraints and the conservation of momentum. 

\begin{figure}[t]
\centering
\includegraphics[width=1\linewidth]{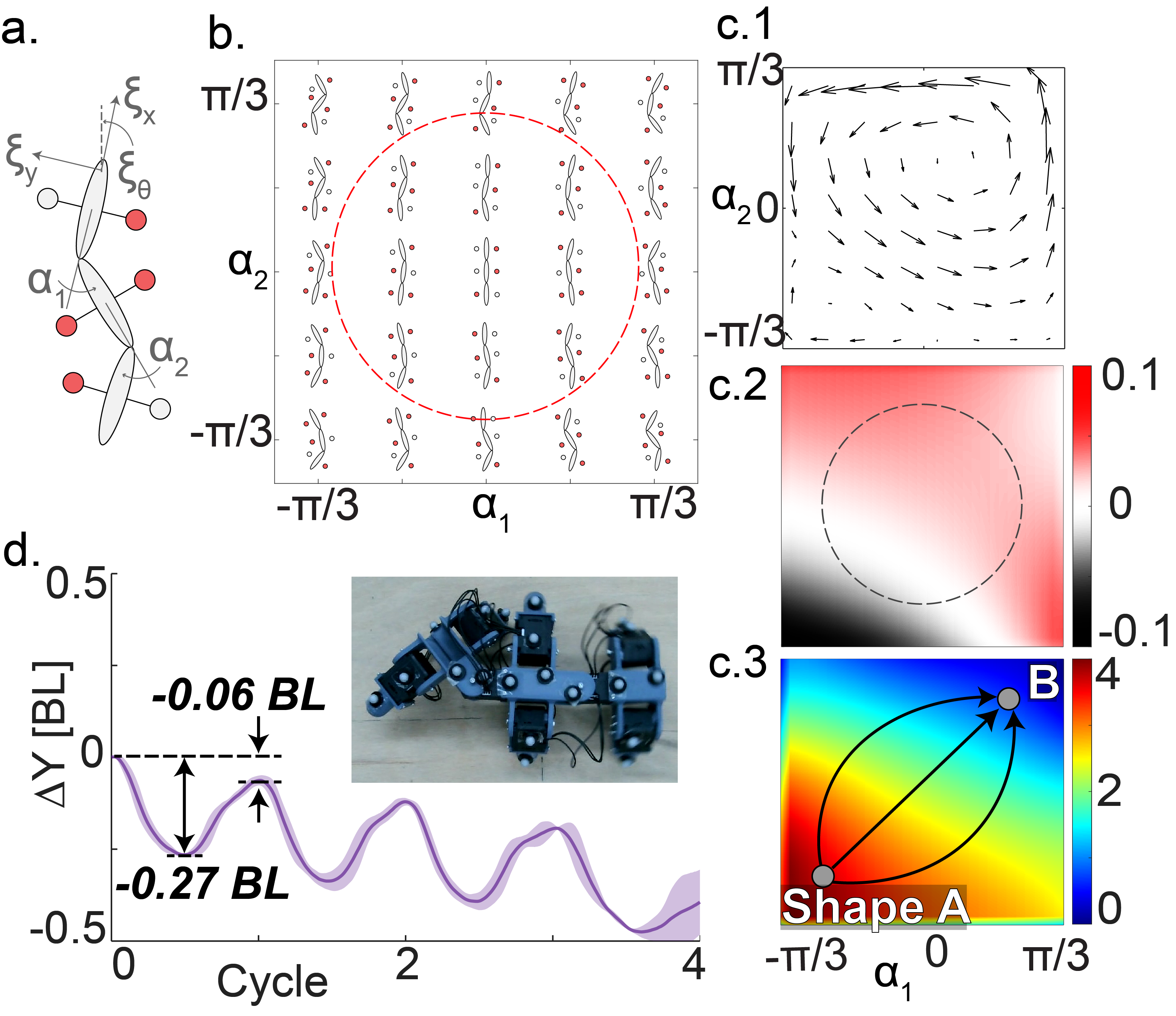}
\caption{\textbf{Geometric mechanics of hexapod locomotion.}
(a) Geometric illustration of hexapod body bending angles $\alpha_1$ and $\alpha_2$ and body velocity ($\xi_x$, $\xi_y$, and $\xi_\theta$)
(b) Shape space of the geometric model parameterized by $\alpha_1,\alpha_2$.
(c.1) Local connection vector field mapping infinitesimal shape velocities to body-frame velocities.
(c.2) Height function over shape space, given by the curl of the divergence-free component of the connection vector field; the net displacement for a closed gait is approximated by the signed area integral of the height function over the enclosed loop.
(c.3) Potential function computed from the curl-free component of the connection vector field. The displacement associated with a shape change (e.g., from shape A to shape B) is gait-path-independent and determined by differences in function values between endpoints.
An order-of-magnitude separation exists between the height function and the potential function over the same shape space.
All panels share identical axes.
(d) Experiment trajectory of a hexapod robot with body undulation evolving without contact changes. In the absence of contact modulation, the robot undergoes oscillatory body motion with peak-to-peak displacement of approximately $0.27$ body lengths (BL), while the net displacement over one cycle remains negligible ($<0.06$ BL).
}
\vspace{-1em}
\label{fig:shapespace}
\end{figure}

The local connection matrix $\bA$ can be numerically derived using resistive force theory (RFT) to model the ground reaction forces (GRF) \cite{li2013terradynamics,sharpe2015locomotor,zhang2014effectiveness}. Specifically, the net GRF experienced by the locomotor is the sum of the GRF experienced by all segments in stance phase (we define a segment to be in the stance phase if it is in contact with the ground, see SI, Sec.~S1). From the geometry and physics of GRF, reaction forces of each foot can be calculated from the body velocity $\bxi$, reduced body shape $\br$, and reduced shape velocity $\dot{\br}$~\cite{rieser2019geometric,murray2017mathematical}.
Assuming quasi-static motion, we consider the total net force applied to the system is zero at any instant in time~\cite{chong2023self}:

\vspace{-0.7em}

\begin{equation}\label{eq:forceIntegral}
    \bF=\sum_{i\in I} {\bF^{i}\left(\bxi,\br,\dot{\br}\right)}=0,
\end{equation}
\vspace{-1em}

\noindent where $I$ is the collection of all stance-phase feet. Per a given body shape $\br$, Eq.~(\ref{eq:forceIntegral}) connects the shape velocity $\dot{\br}$ to the body velocity $\bxi$. Therefore, by the implicit function theorem and the linearization process, we can numerically derive the local connection matrix $\bA(\br)$.
In our implementation, we compute the solution of Eq.~(\ref{eq:forceIntegral}) using the MATLAB function \textit{fsolve}. Importantly, the local connection matrix $\bA(\br)$ is dependent on the contact pattern $I$. 

We then consider the relationship between a gait (a periodic sequence of shape changes, represented as a closed loop in Fig.~\ref{fig:shapespace}(b)) and the resulting body displacement. The displacement along the gait path $ \chi$ can be approximated by~\cite{hatton2015nonconservativity}:
\vspace{-1.0em}
\begin{equation}\label{eq:lineintegral}
    g(T)
    \approx \int_{\chi(0)}^{\chi(T)} {\bA(\br)\mathrm{d}\br} =\int_{\chi(0)}^{\chi(T)}\begin{bmatrix}
\bA^x (\br) \\
\bA^y (\br)\\
\bA^\theta (\br)\\
\end{bmatrix}d\br,
\end{equation}
\vspace{-1em}

\noindent where $\bA^x(\br), \bA^y(\br), \bA^\theta(\br)$ are the three rows of the local connections, corresponding to forward, lateral, and rotational vector fields. The accuracy of the approximation in Eq.~(\ref{eq:lineintegral}) can be optimized by properly choosing the body frame \cite{hatton2015nonconservativity,linoptimizing}. An example vector field ($\bA^x$) is shown in Fig.~\ref{fig:shapespace}(c.1). With the above derivation, the net displacement over a gait period can be approximated with the line integral along the gait path in the local connection vector field. Accordingly, we record the step length in our experiments, defined as the body lengths traveled per cycle (BL/cyc), to match the per-cycle gait path optimization.

\section{Methods}
\subsection{Contact sequence optimization}

In the prior section, we only considered the locomotion problem with a fixed contact pattern, i.e., $I$ is independent of shape variables. Now we consider the locomotion problem where the contact pattern changes. First, we will explore the contact planning problem with a known shape change sequence. Specifically, let $Q=\big\{r_i,\ i\in\{1,\ 2,\ ...\ M\}\ |\ r_i\in\mathbb{R}^2\big\}$ be a collection of sequenced shape variables. We restrict contact switches to occur only in one of the designated shape variables in $Q$. Considering a gait path $\chi$ sequentially connecting all shapes in $Q$, we aim to identify the optimal contact sequence and gait path $\chi$ that maximizes forward displacement.

We first decompose the closed-loop gait path $\chi$ into piece-wise curves $\chi_i$ connecting $r_i$ to $r_{i+1}$. The forward displacement from $\chi$ is then:
\vspace{-1.5em}
\begin{equation}\label{eq:piecewiseint}
    \Delta x= \sum_{i=1}^{M} \int_{ \chi_i} {\bA^x_{I(i)}(\br)\mathrm{d}\br},
\end{equation}

\noindent where $I(i)$ is the contact pattern along the piece-wise curves $\chi_i$, and $\bA^x_{I(i)}$ is the forward vector field evaluated at the contact pattern $I(i)$. Note that the optimization in Eq.~(\ref{eq:piecewiseint}) can be decoupled and separately optimized:

\vspace{-1em}
\begin{align}\label{eq:piecewiseint_sep}
    \max_{\chi,\ I(\chi)} \Delta x = & \max_{\chi,\ I(\chi)} \ \sum_{i=1}^{M} \int_{ \chi_i} {\bA^x_{I(i)}(\br)\mathrm{d}\br} 
\end{align}
\vspace{-0.5em}

We now consider the decoupled optimization problem. Specifically, we seek to optimize $\int_{\chi_i} \mathbf{A}^x_{I(i)}(\mathbf{r}) \cdot \mathrm{d}\mathbf{r}$. We argue that, in most practical applications (particularly in environments governed by dry Coulomb friction), the vector field $\mathbf{A}^x_{I(i)}$ can be well approximated as conservative. Under this approximation, the resulting displacement is independent of the specific path $\chi_i$ taken in the shape space and depends only on the endpoints of the path (Fig.~\ref{fig:shapespace}(c.3), \textit{Shape A to Shape B}). 

To test this assumption, we use the Hodge-Helmholtz decomposition theorem~\cite{guo2005efficient} to decompose the original vector field into an irrotational (curl-free) component and a solenoidal (divergence-free) component: $\mathbf{A}^x_{I(i)} = \mathbf{A}^{x,i}_{I(i)} + \mathbf{A}^{x,s}_{I(i)}$. The magnitude of $\mathbf{A}^{x,i}_{I(i)}$ is characterized by its associated potential function $P$, satisfying $\mathbf{A}^{x,i}_{I(i)} = \nabla P$, such that
$\int_{\chi_i} \mathbf{A}^x_{I(i)}(\mathbf{r}) \cdot \mathrm{d}\mathbf{r}=P(\mathbf{r}_{\mathrm{end}}) - P(\mathbf{r}_{\mathrm{start}})$, which is independent of the path $\chi_i$. In contrast, the magnitude of $\mathbf{A}^{x,s}_{I(i)}$ is characterized by the height function $H = \nabla \times \mathbf{A}^{x,s}_{I(i)}$, for which the displacement generated by a closed-loop path $\chi$ can be estimated by the signed area enclosed by $\chi$ weighted by $H$.

We compute the vector field of the hexapod (lifting front-left and mid-right legs, Fig.~\ref{fig:shapespace}(c.1)), along with the corresponding height function (Fig.~\ref{fig:shapespace}(c.2)) and potential function (Fig.~\ref{fig:shapespace}(c.3)). In both gaits, we observe that the scale of the potential function exceeds that of the height function by more than an order of magnitude, justifying the approximation of the vector field as predominantly conservative. Practically, this implies that although large back-and-forth oscillations in forward displacement may occur during a gait cycle, the cycle-average net displacement remains negligible when the contact pattern is held fixed (illustrated in Fig.~\ref{fig:shapespace}(c.2) by the path along the circle). To validate this hypothesis, we track the forward displacement of a physical hexapod executing a standard body undulation gait with no contact changes. Despite substantial oscillatory motion, reaching amplitudes of up to $0.26$ body lengths (BL), the net displacement over one gait cycle remains negligible (less than $0.06$~BL). 

Under this approximation, let $P^x_{I(i)}$ denote the potential function associated with the curl-free component of $\mathbf{A}^{x,i}_{I(i)}$. Equation~\ref{eq:piecewiseint_sep} can then be simplified to
\vspace{-0.5em}
\begin{equation}\label{eq:piecewiseint_final}
    \max_{\chi,\ I(\chi)} \Delta x  
     = \max_{\chi,\ I(\chi)} \sum_{i=1}^{M}  \Big( P^x_{I(i)}(r_i) -  P^x_{I(i)}(r_{i+1}) \Big).
\end{equation}
\vspace{-0.5em}

Thus, with linear and decoupled terms, Eq.~(\ref{eq:piecewiseint_final}) simplifies the contact planning problem to a linear optimization problem.

\begin{figure}[ht]
\centering
\includegraphics[width=1\linewidth]{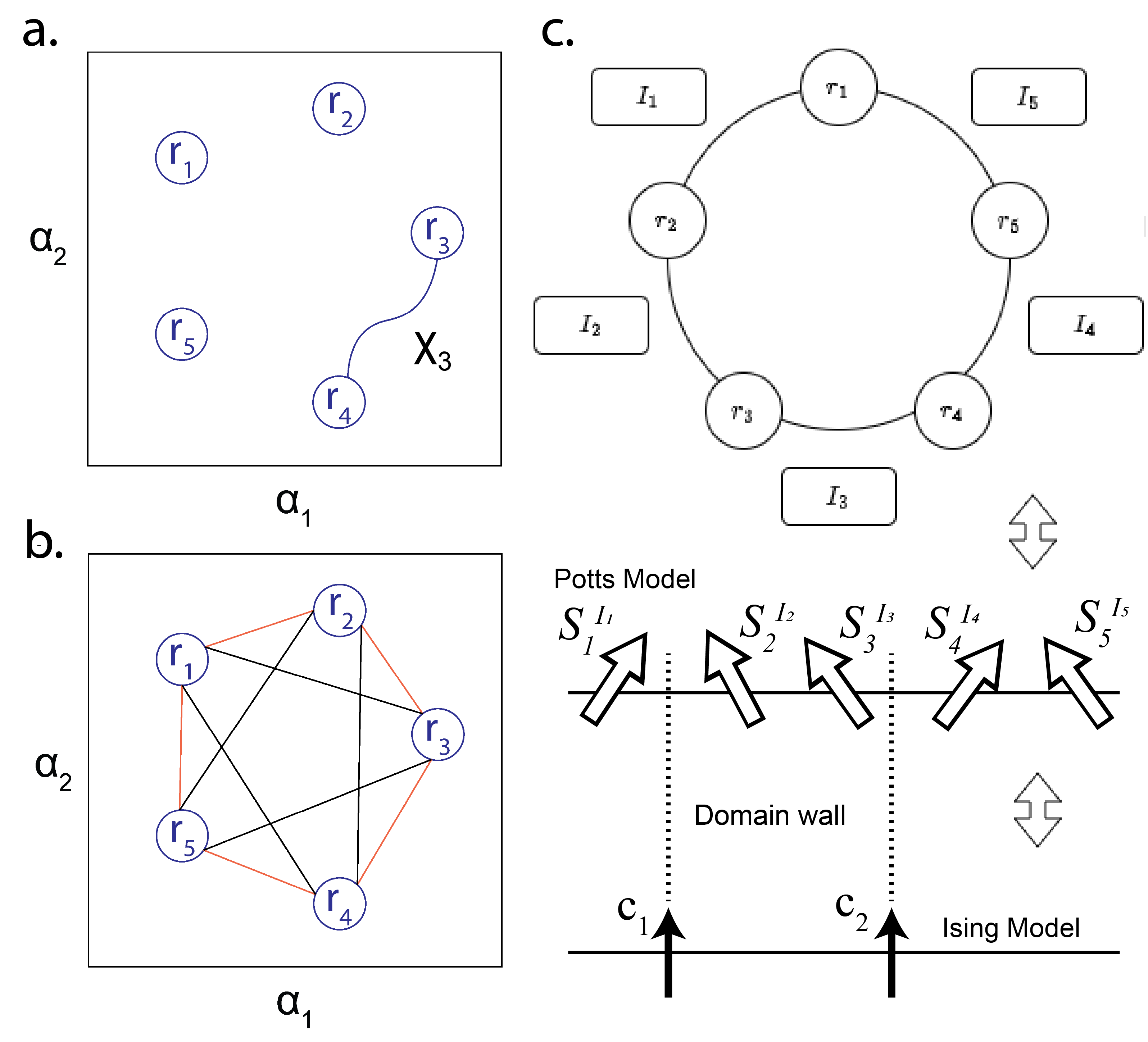}
\caption{\textbf{Gait path mapping.} (a) We sample the shape space with a collection of shape variables $\{r_i\}$. $\chi_3$ is an example path connection of $r_3$ to $r_4$, only a fraction of a closed-loop gait path. (b) An example of a cheating gait path is colored in black. The unique non-cheating gait path is illustrated in red. (c) \textit{(top)} Mapping between robot contact pattern and Potts model. \textit{(bottom)} Duality between Potts model and Ising model.}
\vspace{-1em}
\label{fig:mapping}
\end{figure}

\subsection{Shape change sequence optimization}

In the previous section, we assumed a pre-fixed shape change sequence and formulated a linear optimization problem for contact planning. Here, we explore the shape change sequence optimization. First, we sample shapes in the shape space. Let $Q_{all}=\big\{r_i,\ i\in\{1,\ 2,\ ...\ M\}\ |\ r_i\in\mathbb{R}^2\big\}$ be a collection of sampled shape variables (Fig.~\ref{fig:mapping}(a)). Let $I_{all}$ (with $N$ elements) be a collection of contact states of a multi-legged locomotor. The maximized displacement $\Delta x(C)$ can be approximated as:
\vspace{-0.25em}
\begin{equation}\label{eq:piecewiseint_final_2}
    \Delta x(C) = \max_{C, \ I(C)} \sum_{i\in C} \Big( P^x_{I(i)}(r_{c(i)}) -  P^x_{I(i)}(r_{c(i+1)}) \Big),
\end{equation}

\noindent where $C\subset M$ is the sequence of shape changes, and $\Delta x (C)$ is the resulting displacement. Note that $C$ specifies not only the shape elements, but also the sequence of shape changes. In other words, even with the same elements, there could still be different combinations of sequences in $C$. However, many combinations can be considered as `cheating'. For example, consider a collection of shapes in Fig.~\ref{fig:mapping}(b) with two combinations of sequences colored in red and black. In our framework, we seek to maximize the displacement within \emph{one} period. Therefore, the black path in Fig.~\ref{fig:mapping}(b) has an unfair advantage over the red path, because the black path winds around the origin twice in one period. To avoid `cheating', we define that the shapes can only change in one direction (e.g., clockwise). Thus, given elements in $C$, there is only one valid path connecting all elements. 

In practice, it will always cost finite time and energy to change contact patterns. To account for this cost, we introduce some penalties in contact switching to the cost function. 

\vspace{-1em}
\begin{equation}\label{eq:finalprob}
    \max_{C, \ I(C)}\ \bigg(-\lambda S_{I(C)}+ \sum_{i\in C}  \Big( P^x_{I(i)}(r_{c(i)}) -  P^x_{I(i)}(r_{c(i+1)}) \Big) \bigg),
\end{equation}

\noindent where $S_{I(C)}$ is the number of contact switches over one period, and $\lambda$ is some penalty coefficient. We can further simplify the above optimization problem as a graph optimization problem. Define $V$ as a collection of nodes $v_{ij}$, where the index $j$ denotes the shape and the index $i$ denotes the contact pattern. At each time step, we can either change the contact pattern or shape. Thus, an edge exists connecting two vertices $v_{ij}$ and $v_{kl}$ (if $i=k$ and $j=l$). Let $D$ be a set of weights on the edges. For an edge connecting $v_{ij}$ and $v_{il}$, its weight $d_{ijil} = P^{I(i)}(r_j) -  P^{I(i)}(r_{l}) \in D$. Note that the weight $d_{ijil}$ can be negative. For an edge connecting $v_{ij}$ and $v_{kj}$, its weight $d_{ijkj}$ is 0, i.e., changing the contact pattern at a fixed shape will not cause displacement. With the above notation, the optimization problem in Eq.~(\ref{eq:finalprob}) can be reformulated to find a cycle $(V,D)$, such that the sum of weights along all edges together with $-\lambda S_{I(C)}$ is maximal. 

Owing to the path-independence of the connection vector field, the displacement term $d_{ijil}$ satisfies the following properties:
(1) \emph{Antisymmetry}: $d_{ijil} = - d_{ilij}$; and
(2) \emph{Additivity}: $d_{ijil} + d_{ilik} = d_{ijik}$.
We exploit these properties to reformulate the optimization problem using tools from statistical mechanics.

\subsection{Spin model mappings and solutions}

In general, the coupled contact planning and the shape change optimization problem requires a search in a space of size $N^M$, where $N$ is the number of contact patterns and $M$ is the number of shape variables. A brute-force search will scale poorly as the number of legs and joints increases. To obtain a solution efficiently, we develop a connection between the gait optimization problem and two well-known spin models from statistical physics, the Potts model~\cite{potts1952some} and the Ising model~\cite{mckeehan1925contribution}. In this framework, the optimal shape sequence is equivalent to the ground state of the spin models. In the following sections, we describe the mappings to the Potts model and the Ising model and how it exploits the special structure of the problem to enable efficient gait optimization.

We start with mapping the shape sequence problem to a $\mathrm{Z}_N$-spin model. A $\mathrm{Z}_N$-spin model considers an array of $M$ spins sitting on $M$ sites, where each spin has $N$ internal states. Each spin can interact with other spins or experience an onsite magnetic field. This model is well-known to describe magnetism in physics, where each spin can be thought of as a small magnet. In particular, for $N=2$, it is reduced to the famous Ising model~\cite{mckeehan1925contribution}. It is known that Ising models have deep connections to graph optimization problems and provide equivalent formulations of many NP-complete and NP-hard problems~\cite{lucas2014ising}.

\subsection{Potts model}

If we treat the shape variable $r_j$ as the site index $j$ and the contact pattern $I_j$ as the spin internal state, the pair $(r_j,I_j)$ can be mapped to a spin variable $s_j^i$ (or equivalently $s_j^{I_i}$ in Fig.~\ref{fig:mapping}(c)) at site $j$. The displacement $P^x_{I(i)}(r_{c(j)}) -  P^x_{I(i)}(r_{c(j+1)})$ can be associated with the onsite magnetic field energy $d'_{ij(j+1)} s^i_j$ of the spin variable by defining $s^i_j =e^{2\pi i/N}$ and $d'_{ij(j+1)}=d_{ij(j+1)}e^{-2\pi i/N}$. Here $d'_{ijl}$ is interpreted as the onsite magnetic field for the $\mathrm{Z}_N$-spin model. Since the shape sequence optimization problem is defined on a loop, the problem can be mapped to a $\mathrm{Z}_N$-spin model with periodic boundary conditions as follows:

\vspace{-1em}
\begin{equation}
    H = -\sum_j d'_{ij(j+1)} s^i_j
\end{equation}
\vspace{-1em}

The optimal shape sequence that maximizes Eq.~(\ref{eq:piecewiseint_final_2}) is equivalent to the ground state of the Hamiltonian, i.e., the configuration that minimizes the energy. In this case, the ground state is achieved by each individual spin: $s^i_j$  follows the largest $d_{ij(j+1)}$. This is also known as the greedy solution. Translating back to the shape sequence language, the greedy algorithm chooses the $I(i)$ that maximizes $P^x_{I(i)}(r_{c(i)}) -  P^x_{I(i)}(r_{c(i+1)})$ for each $r_{c(i)}$. 

%

If we further consider the cost of changing contact pattern with penalty $\lambda$ in Eq.~(\ref{eq:finalprob}), it is equivalent to introducing a nearest spin interaction with coupling $\lambda$, which gives rise to a Potts model~\cite{potts1952some} with a nonuniform magnetic field 
\vspace{-0.5em}
\begin{equation}
    H_s = -\sum_j d'_{ij(j+1)} s^i_j - \lambda \delta_{s^i_j, s^i_{j+1}} \label{eq:potts}
    \vspace{-1em}
\end{equation}

\noindent where $\delta_{s^i_j, s^i_{j+1}}=1$ if $s^i_j=s^i_{j+1}$ and 0 otherwise.

The mapping between the shape sequence and the Potts model is also described in the upper part of Fig.~\ref{fig:mapping}(c). For this model, the search complexity is $N^M$. The ground state is expected to have different phenomena depending on $\lambda$. For small $\lambda$, it is disordered, where each spin follows the local field, and the optimal shape sequence is close to the greedy solution. For large $\lambda$, the spins align, and it is ferromagnetic. The optimal shape sequence approaches a uniform gait pattern or a gait pattern with only a small number of jumps.

\subsection{Ground state duality to Ising model}
\label{Sec:Ising}
For the Potts model in Eq.~(\ref{eq:potts}), we further define a domain wall $c_{j+1}$ to be the existence of a different spin configuration between $s_j$ and $s_{j+1}$, where $c_{j+1}=1$ if the domain wall exists and otherwise 0. The mapping between the Potts model and the Ising model is shown in the bottom part of Fig.~\ref{fig:mapping}(c). It follows that we can establish the ground state duality between the Potts model and a long-range Ising model using the domain wall mapping and the additive property of $d_{ijl}$. 

\begin{thm}
The Potts model in Eq.(\ref{eq:potts}) with $d_{ijl}$ satisfying the additive property has the same ground state energy as the following Ising model with long-range couplings $J_{jl}$.

\vspace{-1em}
\begin{equation}
    H_c = -\sum_j J_{jl} c_j c_l - \lambda \sum_i c_i 
    \label{eq:domain}
    \vspace{-1.5em}
\end{equation}
where $J_{jl} = \textup{max}_i d_{ijl}$.
\end{thm}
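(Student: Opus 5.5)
The plan is to prove equality of ground-state energies by exhibiting a correspondence between Potts configurations and domain-wall configurations, and then to show two inequalities: every Potts configuration is dominated by some Ising configuration, and every Ising configuration is achieved or beaten by some Potts configuration. I read the long-range term $J_{jl}c_jc_l$ as coupling \emph{consecutive} active domain walls along the periodic loop: $j<l$ with $c_j=c_l=1$ and $c_k=0$ for $j<k<l$. I read $\lambda$ as a cost of $\lambda$ per contact switch, matching Eq.~(\ref{eq:finalprob}); the sign convention of the $\lambda$ term in Eq.~(\ref{eq:domain}) is taken in that sense. I would state this interpretation explicitly at the start, since with an all-pairs sum over $c_jc_l$ the statement would not be meaningful.

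\textbf{Step 1 (decomposition).} Fix a Potts configuration $\{s_j\}$ on the periodic chain and define $c_{j+1}=\mathds{1}[s_j\neq s_{j+1}]$. The walls cut the loop into domains $[j,l)$ between consecutive walls, and on each domain the contact pattern is a constant $i$.
\begin{itemize}
\item By repeated use of the additivity property $d_{ijl}+d_{ilk}=d_{ijk}$, the field contribution of a domain telescopes to $\sum_{k=j}^{l-1} d_{ik(k+1)}=d_{ijl}$.
\item The $\lambda$ term depends only on the number of walls $\sum_j c_j$.
\end{itemize}
The total Potts objective is therefore $\sum_{\text{domains}} d_{i_{\rm dom}\,j\,l}$ minus $\lambda$ times the number of walls. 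With no walls, the whole loop is a single domain; antisymmetry and additivity give $d_{ijj}=0$, which matches the Ising value $0$.

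\textbf{Step 2 (Ising bounds Potts).} Replacing each $d_{i_{\rm dom}jl}$ by $\max_i d_{ijl}=J_{jl}$ can only improve the objective. Hence, for the wall pattern induced by $s$, the Potts energy is at least $H_c(c)$, and so the Potts ground-state energy is at least the Ising ground-state energy.

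\textbf{Step 3 (Potts achieves Ising).} This is the step I expect to be the main obstacle. Given an optimal wall pattern $c$, label each domain with its maximizing contact pattern $i^*(j,l)$. The difficulty is that two adjacent domains may receive the same $i^*$, so the resulting Potts configuration has a ``fictitious'' wall that it never pays for: it has fewer true switches than $\sum c_j$. I would handle this in two ways.
\begin{itemize}
\item If adjacent domains share a label $i$, merging them keeps the field value exactly by additivity, $d_{ijl}+d_{ilk}=d_{ijk}$, and removes a $\lambda$ penalty. The Potts energy of the labelled configuration is then at most $H_c(c)$.
\item Alternatively, note that $c$ with the fictitious wall removed gives an Ising energy $\le H_c(c)$, since $J_{jk}\ge d_{ijk}=d_{ijl}+d_{ilk}$.
\end{itemize}
Either way, the minimum over Potts configurations is at most the minimum over Ising configurations.

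Combining Steps 2 and 3 gives equality of the ground-state energies. The correspondence also maps ground states to each other, once fictitious walls are merged as in Step 3.
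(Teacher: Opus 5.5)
Your proof is correct and follows essentially the same route as the paper's duality argument: map to domain walls, telescope each domain's onsite fields to $d_{ijl}$ via additivity, and choose each domain's contact label to attain $J_{jl}=\max_i d_{ijl}$ between consecutive walls. Your interpretive choices are what the statement needs in order to hold: nearest-wall coupling, and $\lambda\ge 0$ charged per switch, which differs from the literal $-\lambda\delta$ form of Eq.~(\ref{eq:potts}) only by the additive constant $\lambda M$. Your treatment of equal labels on both sides of a wall, which is exactly where $\lambda\ge 0$ is used, correctly secures the reverse inequality.
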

\textit{Proof.} See SI, Sec.~S2

In general, the Potts model would not be dual to the Ising model due to the fact that they have different spin symmetries. However, duality exists because we only study the ground state and additive property of $d_{ijl}$, which is special in the robotic contact problem. The mapping reduces the complexity of the original problem from $N^M$ to $2^M$. Based on the Ising model Hamiltonian, we develop a domain-wall search algorithm.

\textbf{Domain-wall search algorithm.} Specify the number of domain-walls or jumps $K$ to be searched. Compute the energy $-\sum J_{ij} c_j c_i$ of all $M$-site spin configurations that have $K$ numbers of $c_j=1$. Choose the lowest energy configuration among all as the solution and translate it back to the shape sequence according to the mapping.

There are several advantages of the duality and the domain-wall search algorithm. First, it is independent of the number of shape variables $N$ so that one can always work with a lower degree representation $\mathrm{Z}_2$-spin instead of a $\mathrm{Z}_N$-spin. This is particularly helpful for generalization to a multi-legged robot with many shape variables. Second, the domain wall has a natural interpretation as the jump in the shape sequence problem. In practice, if we would like to have a small number of jumps $K$, it is equivalent to imposing total spin symmetry number or magnetization $K$ to the model. Without such symmetry, the original model in Eq.~(\ref{eq:domain}) lives in a Hilbert space of size $2^M$ where $M$ is the number of shape variables. The symmetry constraint helps to reduce the Hilbert space complexity to $\text{Cr}(M,K)$. We can also ignore the penalty term $\lambda$ in this dual formulation within a fixed spin symmetric sector, because the penalty will only contribute a constant shift to the energy in the same sector. Therefore, the domain-wall search algorithm can reach the optimal solution of the problem in polynomial time when $K$ is finite. This approach provides the foundation for designing optimal locomotion with physics-informed gaits.


\vspace{-0.3 em}
\section{Experimental Setup}
\vspace{-0.3 em}

To test our method, we employ a hexapod robot (which we will refer to as the robophysical model) featuring two rotational degrees of freedom in the body and one rotational degree of freedom in each leg. The elongated body, measuring $26.5$ cm in length, is segmented by bending joints that allow for rotations of up to $\pm 5\pi/16$ without causing self-collision. Each segment is equipped with limbs on both sides, which can be lifted and landed through control of the rotational joint at the shoulder. All rotational joints are actuated by Dynamixel XL430-W250-T servo motors, and the body linkages and appendages are 3D-printed from PLA. Experiments were conducted on a flat, hard surface and were based on the assumption of dry Coulomb kinetic friction. Servo motors were directed using position commands executed by onboard PD control. All gaits were tested with three repeated trials. Each trial consisted of five complete gait cycles. Unless otherwise noted, values are reported as mean~$\pm$ standard deviation (SD) across the three repeated trials. Robot motion was captured using a Vicon motion capture system, comprised of six Vero optical motion capture cameras. 

\begin{figure}[t!]
\centering
\includegraphics[width=1\linewidth]{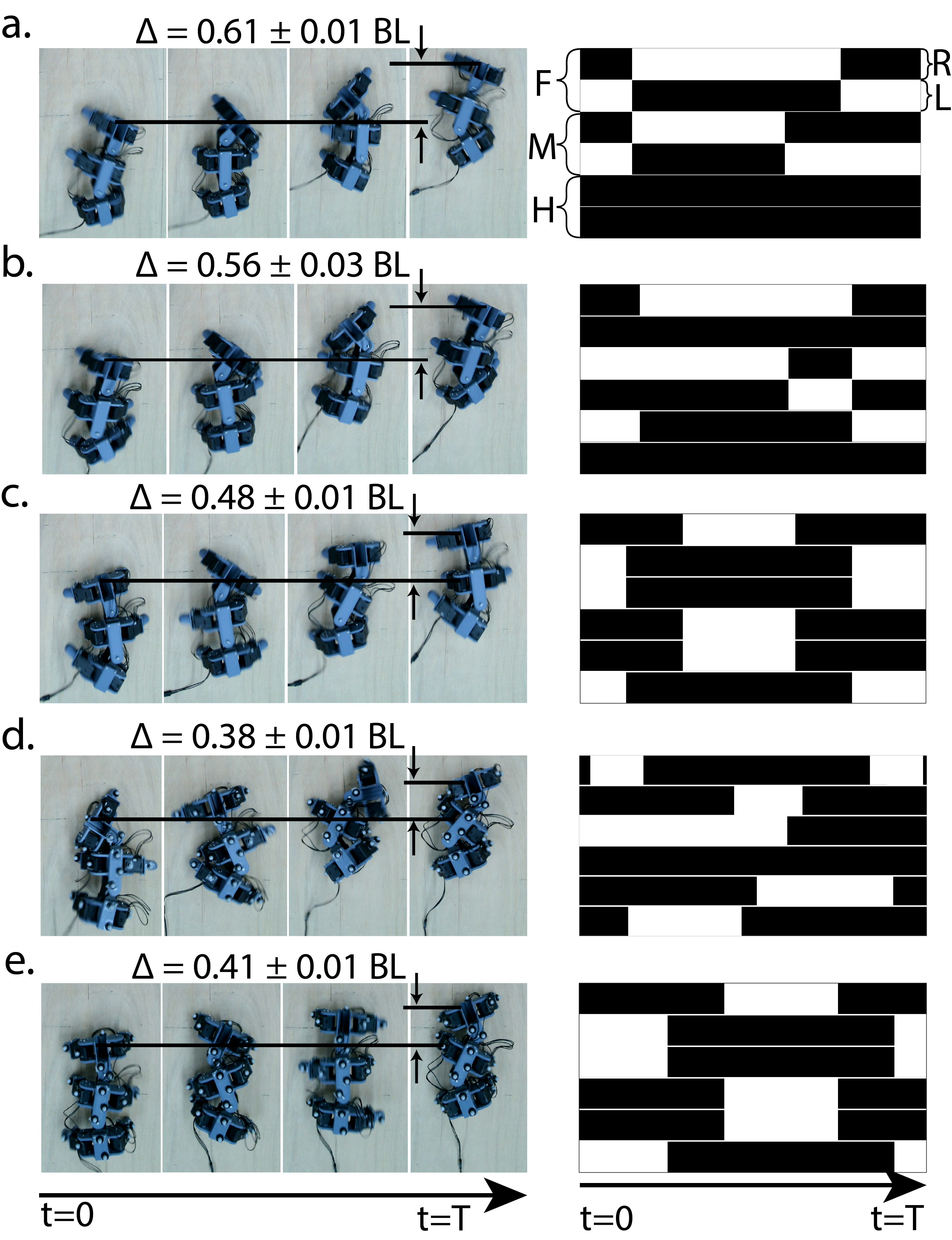}
\caption{\textbf{Comparison of hexapod forward locomotion.}
(\textit{Left}) Time-lapse snapshots of the robot executing the prescribed gaits.
(\textit{Right}) Corresponding commanded leg–ground contact patterns.
(a) Physics-informed asymmetric gait exhibiting slow, long-duration counterclockwise whole-body yaw (from $t=0$ to $\approx0.66T$) followed by rapid, short-duration clockwise yaw, resulting in net forward translation. Hind legs remain in contact throughout gait cycle.
(b) Alternative physics-informed gait in which the front-left and hind-left legs remain in continuous contact throughout the cycle, yet still achieve high forward performance. Similar long-duration counterclockwise and short-duration clockwise whole-body yaw to (a).
(c) Best bio-inspired gait based on an alternating tripod contact pattern.
(d) Best learned open-loop gait obtained via reinforcement learning.
(e) Best extended quadrupedal gait, in which the third body segment follows the same kinematics as the first two segments (effectively reducing the system to a quadruped). Corresponding to $\phi_{\alpha}=\pi$.
}
\vspace{-0.8em}
\label{fig:hexforward}
\end{figure}

\vspace{-0.3 em}
\section{Results}
\vspace{-0.3 em}

We demonstrate the effectiveness of gaits generated by our optimization framework through both simulation and robophysical experiments (Sec.~\ref{Sec:PhysicsGaitResults}), with particularly strong performance at high speeds (Sec.~\ref{Sec:FrequencyResults}). The resulting gaits outperform reference baselines, including bio-inspired tripod gaits (Sec.~\ref{Sec:TripodGaitResults}), extended quadrupedal gaits (Sec.~\ref{Sec:TripodGaitResults}), and a PPO-trained reinforcement learning policy (Sec.~\ref{Sec:RLGaitResults}).

\subsection{Physics-informed Gait}\label{Sec:PhysicsGaitResults}

We use geometric mechanics and the Ising model domain-wall search algorithm to identify new symmetries in hexapod robots for forward locomotion. With six legs, the hexapod has 64 different contact states; however, over half of these states are unstable and cannot be achieved (e.g., no legs in stance phase). For simplicity, we restrict the search to stable contact patterns with at least four legs in stance (22 total contact patterns). We additionally include two three-leg patterns corresponding to the alternating tripod contact states. Thus, in total, we consider 24 contact states as the stable contact space of the hexapod. We sample the shape space by a circular prescription: $\alpha_1 = A_\alpha \sin{(\tau)}, \alpha_2 =  A_\alpha \cos{(\tau)}$, and $\tau$ is uniformly sampled over a period.

We observe that the identified optimal gaits (following our methods in Sec.~\ref{Sec:Ising})  are sensitive to our choice of amplitude $ A_\alpha$. For a range of $ A_\alpha$, the optimization converges to two gaits, corresponding to the optimization for $A_\alpha<7\pi/18$ rad and $A_\alpha>7\pi/18$ rad. We will refer to them as the physics-informed low-amplitude gait and the physics-informed high-amplitude gait. Note that the actual amplitude of the sinusoidal motion is set to $5\pi/16$ due to hardware limitations. However, because the hardware-limit amplitude is near the boundary for physics-informed low-amplitude and high-amplitude gaits, we test both gaits. Snapshots of our robot implementing these gaits are illustrated in Fig.~\ref{fig:hexforward}(\textit{left}). We also illustrate the contact map in Fig.~\ref{fig:hexforward}(\textit{right}). 

We observe that in both gaits, two legs on the hexapod remain in full contact throughout a gait period. In the short-amplitude gait, the hind limbs remain unchanged, and in the large-amplitude gait, the front left and hind left limbs remain in full contact (Fig.~\ref{fig:hexforward}(a-b)). We tested our identified gaits on a robophysical hexapod model and recorded speeds greater than bio-inspired gaits. The small-amplitude (with both hind limbs unactuated) produces a step length of $0.61\pm 0.01$ BL/cyc, whereas the large-amplitude gait (FL and HL limbs unactuated) produces a step length of $0.56 \pm 0.03$ BL/cyc. 

Notably, because two of the legs are not actuated throughout the locomotion, they can be replaced by non-actuated rigid parts. To test this, we construct non-actuated limbs and observe no change in forward displacement (Fig.~\ref{fig:introRobot}(c) and SI movie).

\begin{figure}[t]
\centering
\includegraphics[width=\linewidth]{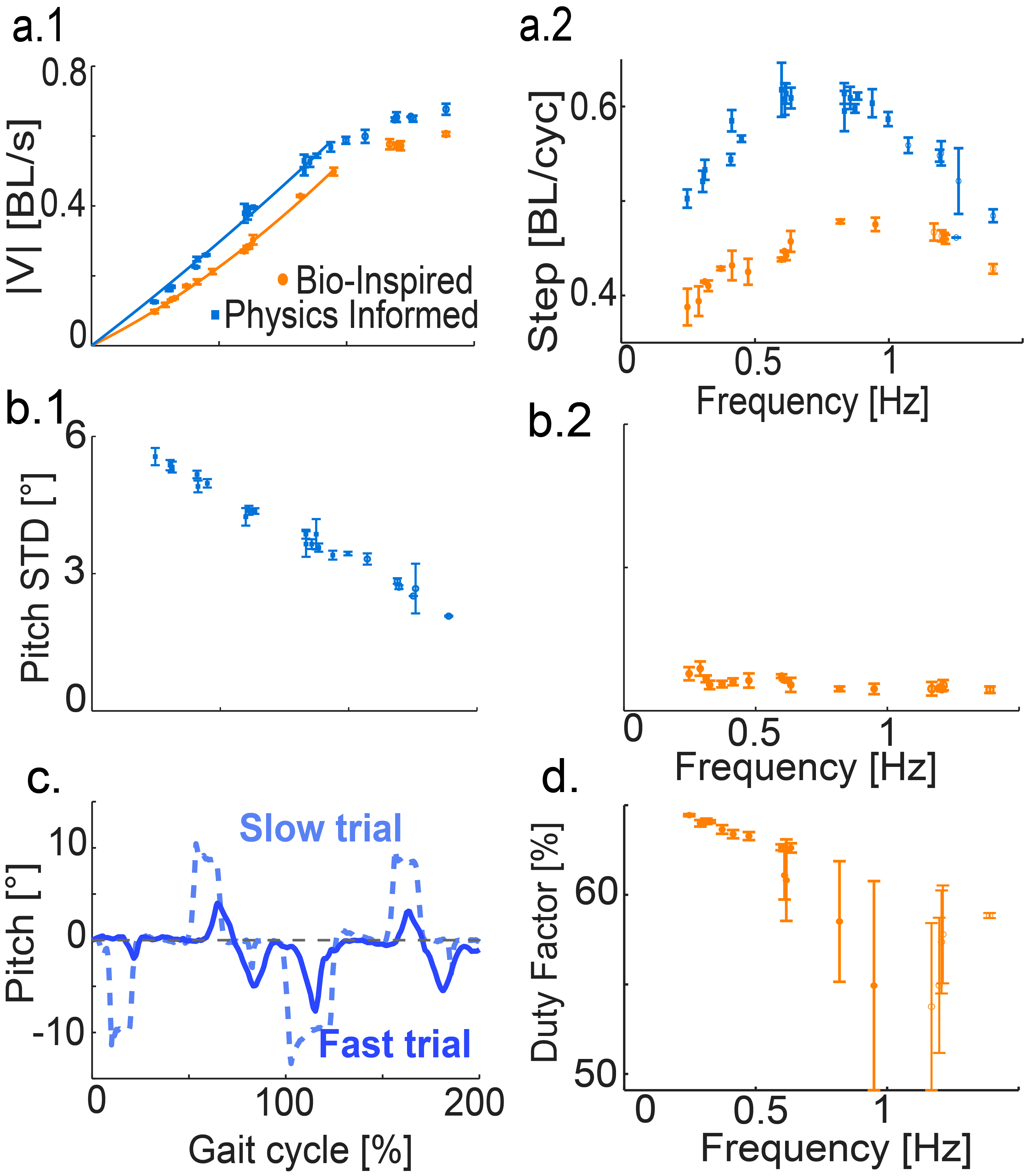}
\caption{\textbf{Agile locomotion at high speed.}
(a) (a.1) Absolute speed and (a.2) step length (displacement per gait cycle) for the bio-inspired and physics-informed gaits as a function of gait frequency. Data above 1~Hz are excluded from quantitative analysis due to motor limitations that cause substantial deviation between commanded and executed position (see SI, Sec.~S3); shown as hollow markers. We excluded outliers from each data set with studentized residuals $> 3$. Because step length increases approximately linearly with frequency, the absolute speed scales quadratically with frequency for both gaits. Panels share identical x-axes.
(b.1) Mean standard deviation of pitch angle as a function of frequency for the physics-informed gait and (b.2) for the bio-inspired gait. Panels share identical axes.
(c) Experimental measurements of whole-body pitch for the physics-informed gait at low and high frequencies over two cycles. At low frequencies, the robot exhibits large pitch oscillations due to loss of static stability near unstable configurations. At high frequencies, dynamically acquired stability compensates for these perturbations, resulting in substantially reduced pitch oscillations.
(d) Illustration of the effect of gait frequency on duty factor: as frequency increases, duty factor decreases, which may partially account for the larger step length observed for the bio-inspired gait at higher frequencies.
}
\label{fig:hexfrequency}
\end{figure}

\subsection{Impact of Frequency}\label{Sec:FrequencyResults}
During gait transitions in the physics-informed gaits, we observe that there is a substantial amount of whole-body pitch during locomotion (see SI movie). We quantify this by measuring the whole-body pitch angle, which is defined as the difference between the initial body vector and current body vector projected onto the forward and vertical axes (a positive pitch corresponds to the body vector pointing up). We posit that this unstable pitch oscillation can be avoided when the robot is operating at a higher frequency, and thus the physics-informed gait should achieve better performance at high speed.

To test this hypothesis, we record the speed dependence of the physics-informed small-amplitude gait (Fig.~\ref{fig:hexfrequency}(a)). Experimental results reveal robot step length (BL/cyc) increases linearly with gait frequency. Thus, there is a quadratic relationship between absolute speed and frequency ($|v_{physics}| = 0.559f + 0.068f^2, R^2=0.9920, p < 0.05$). Notably, data above 1~Hz are excluded from quantitative analysis due to the disparity between commanded and actual position caused by motor limitations (see SI, Sec.~S3). 

Next, we analyze the effect of frequency on body pitch to understand the increasing speed relationship for our physics-informed gait. Comparing two trials at high and low gait frequency, total-body pitch exhibits higher oscillations with lower frequency (Fig.~\ref{fig:hexfrequency}(c)). This relationship is attributed to reduced static stability near unstable configurations. In contrast, higher gait frequencies introduce dynamic stability that reduces perturbations from unstable configurations (i.e., increasing frequency reduces the time spent falling in an unstable configuration). Analysis of the average mean standard deviation at increasing gait frequencies supports this idea Fig.~\ref{fig:hexfrequency}(b.1-2).  

\vspace{-0.2em}
\subsection{Baseline: bio-inspired gait}\label{Sec:TripodGaitResults}
\vspace{-0.2em}

Inspired by insect locomotion, we prescribe the leg–ground contact pattern using an alternating tripod gait, in which the front-right (FR), middle-left (ML), and hind-right (HR) legs form one tripod, while the front-left (FL), middle-right (MR), and hind-left (HL) legs form the other. Each tripod configuration is statically stable. To avoid unstable configurations during contact switching, we introduce an intermediate all-contact phase in which all six legs remain in contact with the ground. As a result, the overall duty factor (defined as the average fraction of a gait cycle during which a leg remains in contact with the ground) is $66.6\%$. Consistent with prior studies, duty factor plays a critical role in determining the locomotion performance of alternating-tripod gaits~\cite{hildebrand1989quadrupedal, chong2022general}. The prescribed contact function is illustrated in Fig.~\ref{fig:tripod}(b).

To simplify our analysis, we prescribe sinusoidal functions to upper and lower body links,  $\alpha_1 = A_\alpha \sin(t)$ and $\alpha_2 = A_\alpha \sin(t+\phi_\alpha)$, respectively (Fig.~\ref{fig:tripod}(a)). $A_\alpha$ is the joint angle limit on the robot, and $\phi_\alpha$ is the phase shift between the upper and lower body joints ($\alpha_1 $ and $\alpha_2$) (Fig.~\ref{fig:tripod}(b)). Additionally, we define $\phi_\beta$ as the phase difference between the upper body joint and the contact sequence (Fig.~\ref{fig:tripod}(b)). Thus, we form a parameter space spanned by $\phi_\alpha$ and $\phi_\beta$. 

We test how these two parameters affect hexapod locomotion through robophysical experimentation and RFT-based model predictions. The RFT-based model agrees well with experimental results subject to a phase shift of $\pi/8$ on $\phi_\beta$ (Fig.~\ref{fig:tripod}(c)). We attribute this phase shift to the finite time it takes for contact switching in robophysical experiments. We observe that the parameter set of $\phi_\alpha = 11\pi/8$ and $\phi_\beta = 0$ produces the highest step length of $0.48 \pm 0.01$ BL/cyc, which we will refer to as the best bio-inspired gait. 

Additionally, when $\phi_\alpha = \pi$, the hexapod gait is effectively reduced to a quadruped gait (the conventional gait) because the additional pair of legs only contributes to stability. We refer to this gait as the extended quadrupedal gait (step length of $0.41\pm0.01$ BL/cyc).

Finally, we record the speed dependence of the best bio-inspired gait (Fig.~\ref{fig:hexfrequency}(a)). Similarly to the physics-informed gait, the robot step length increases linearly with gait frequency. The bio-inspired gait has a quadratic fit of $|v_{bio}|=0.362f + 0.177f^2, R^2=0.9969, p < 0.05$. To explain the quadratic relationship of speed and frequency in the bio-inspired gait, we analyze the actual duty factor of the bio-inspired gait illustrated in Fig.~\ref{fig:hexfrequency}(d). Bio-inspired gaits are assigned a 66.6\% duty factor. As frequency increases, duty factor decreases because of the finite time it takes to switch contact, which we  attribute as the mechanism for increased speed of the bio-inspired gait. We observe increasing gait frequency has no impact on the bio-inspired pitch standard deviation due to the stable static gait configurations and imposed duty factor.

\begin{figure}[t]
\centering
\includegraphics[width=1\linewidth]{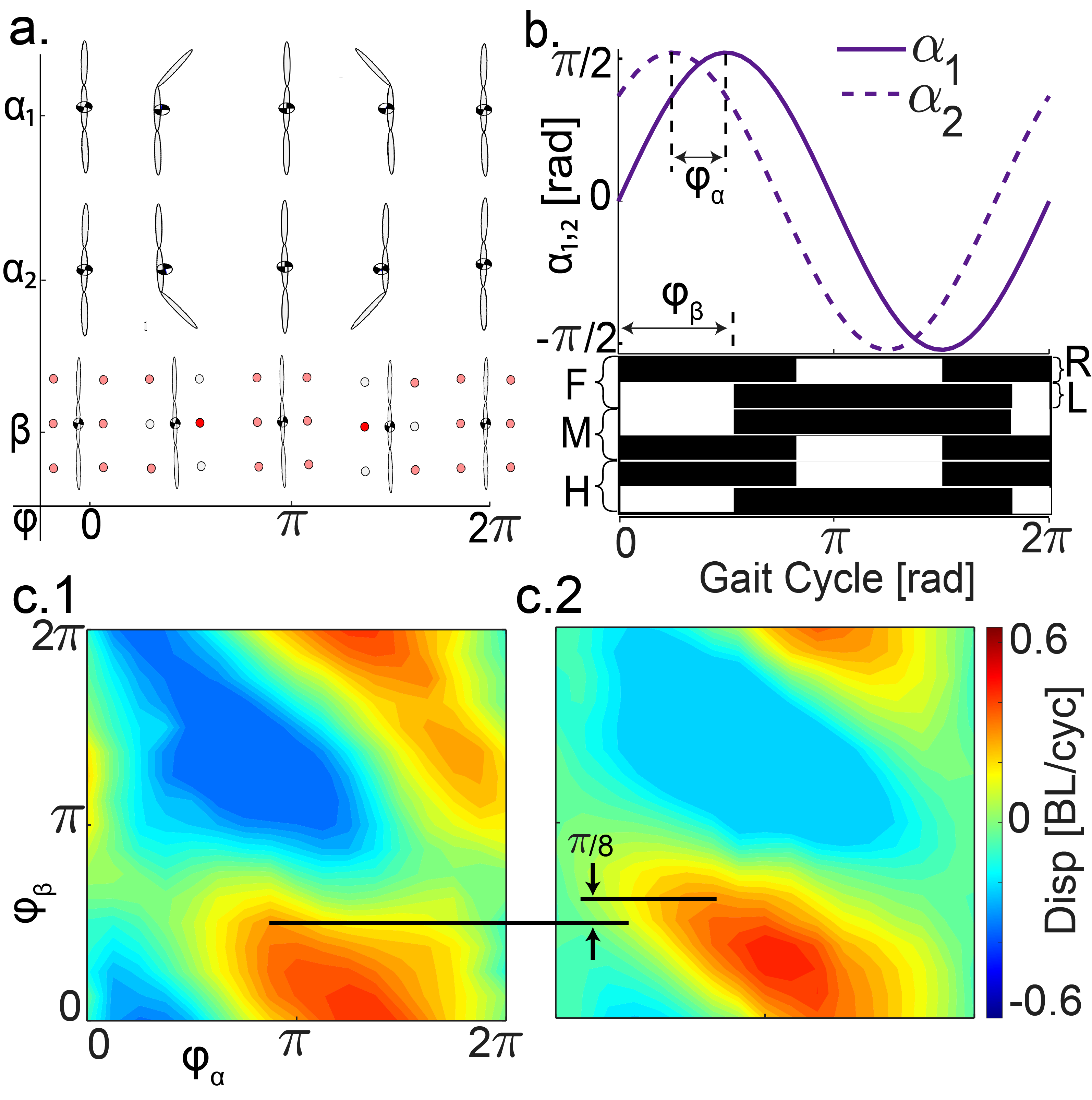}
\caption{\textbf{Bio-inspired hexapod gait.}
(a) Sinusoidal prescriptions of (\textit{top}) upper body joint $\alpha_1$, (\textit{middle}) lower body joint $\alpha_2$, and (\textit{bottom}) alternating tripod contact pattern $\beta$. Legs in stance and aerial phases are indicated by red and open circles, respectively.
(b) Definition of gait parameters: $\phi_\alpha$, the phase shift between $\alpha_1$ and $\alpha_2$; and $\phi_\beta$, the phase shift between $\alpha_1$ and the contact pattern $\beta$. Legs are labeled by position: F (front), M (middle), H (hind), R (right), and L (left).
(c.1) Experimental measurements and (c.2) RFT-based model predictions over the gait parameter space spanned by $(\phi_\alpha, \phi_\beta)$. Experimental results exhibit an effective contact phase shift of $\pi/8$ relative to simulation, attributed to finite leg touchdown time following contact commands.
}
\vspace{-1em}
\label{fig:tripod}
\end{figure}

\subsection{Comparison to Reinforcement Learning}\label{Sec:RLGaitResults}
For many applications requiring quadrupedal or bipedal locomotion, reinforcement learning has become a promising approach to design robust, agile gaits for these systems~\cite{margolis2024rapid, choi2023learning, li2025reinforcement}; however, these approaches struggle to uncover new or improved gaits for higher-dimensional systems, partially because we lack a principled understanding of multi-legged robot locomotion. To test this, we perform deep reinforcement learning in Isaac Lab on our hexapod model to produce an open-loop gait for forward locomotion. We train a PPO policy for forward-velocity tracking. The action space is continuous joint position commands: at time $t$, the policy outputs $a_t \in \mathbb{R}^{8}$, which specifies desired positions for the 8 actuated joints. The policy is conditioned on a continuous, singular forward-velocity command $v^{\mathrm{cmd}}_t \in \mathbb{R}$ (horizontal and turning commands are set to $0$), and the reward penalizes tracking error between the measured forward velocity $v^x_t$ and $v^{\mathrm{cmd}}_t$. The observation is restricted to an action history of length $o_t = \big[a_{t-1};\, a_{t-2};\, \ldots;\, a_{t-14}\big] \in \mathbb{R}^{112}$, to produce an open-loop gait pattern. The resulting reward curve is illustrated in Fig.~\ref{fig:RL}(a). We used standard reward functions from prior work on learning locomotion~\cite{rudin2022learning} (see SI, Sec.~S4 for learning parameters).

To reduce the sim-to-real gap, we identify the motor dynamics of the robophysical model qualitatively through comparison of motor response curves (see SI, Sec.~S4). Additionally, we analyze the displacement of the learned gait through simulation in Isaac Sim and on our robophysical model (Fig.~\ref{fig:RL}(b)). The displacement was fitted to 5 cycles of the learned gait in the simulation, because the deployed gait frequency on the robophysical model was slower than that of simulation. The displacement results show a low sim-to-real gap of our learned gait. Our open-loop reinforcement learning approach achieves an experimentally tested step length of $0.38$ BL/cyc, underperforming both bio-inspired and physics-informed gaits (Fig.~\ref{fig:hexforward}(d)). The learned gait produces a gait pattern similar to that of the extended quadrupedal gait, in which the front and hind sets of limbs were used in a quadrupedal trotting fashion while the middle pair provided stability (see SI movie). Our analysis reveals an increased difficulty for reinforcement learning to take advantage of all pairs of legs.

\begin{figure}[t]
\centering
\includegraphics[width=1\linewidth]{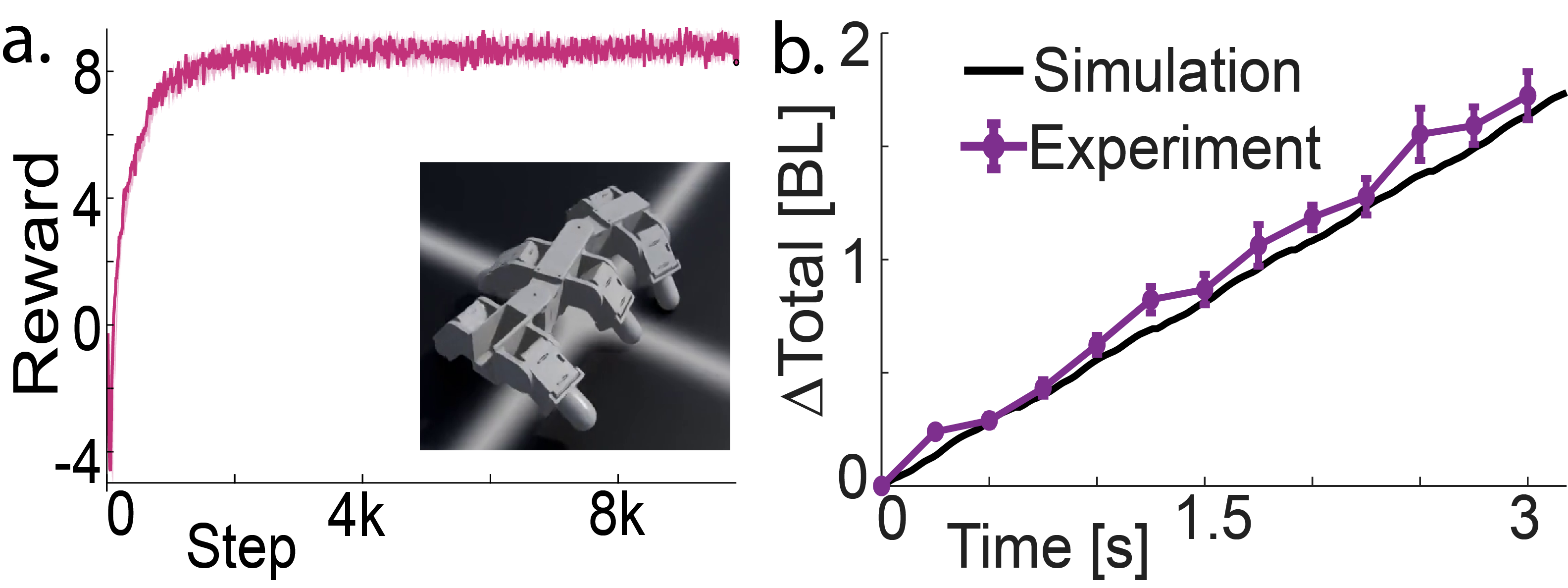}
\caption{\textbf{Open loop RL gait training and sim-to-real transfer.} (a) Plot of mean reward during open-loop reinforcement learning training. Training was conducted over $9,800$ steps with $4096$ parallel environments per step. (\textit{Illustrated}) Hexapod model in Isaac Sim performing learned gait. (b) Comparison of simulation and experiment displacement of learned RL gait. The plot compares 5 cycles of the gait with fitted time scales. Real-world deployment of RL gait consisted of gait cycles $\approx 20\%$ longer than simulation. The results show low sim-to-real gap of the learned gait.
}\vspace{-1em}
\label{fig:RL}
\end{figure}

\vspace{-0.5em}
\section{Discussion and Conclusion}
\vspace{-0.2em}
In this paper, we develop a framework to capture and apply the additional symmetries available in many-legged systems to provide agile forward locomotion. To do this, we design a geometric mechanics model for multi-legged locomotion and make carefully chosen simplifications to translate the model into a graph optimization problem. Furthermore, we provide an approach to use the duality of Ising and Potts spin models to solve this optimization problem in polynomial time and explore the complex gait space of a hexapod robot. Results of our approach provide significant improvements in forward translation for hexapod robots, verified through simulation and robophysical experiments. Our work provides a general approach for studying the locomotion of multi-legged robots and motivates future integration of machine learning to improve the optimization.

We note that although our physics-informed approach is primarily aimed at deriving principled control strategies for multi-legged robots, the same framework can also be used to guide robot design. For example, it enables the systematic design of novel morphologies, such as a hexapod robot in which certain legs are replaced by non-actuated body parts. Beyond solving the associated graph optimization problem, our method provides a framework to reason about which vertices in the graph can be added or removed without degrading locomotion performance. In robotic terms, this framework allows us to quantify how adding, removing, or relocating legs affects overall locomotion, thereby enabling co-design of morphology and control for highly redundant robotic systems.

We acknowledge several limitations of the present framework. Our approach assumes that an environment model has been established a priori; specifically, we consider (1) flat, noise-free terrain and (2) full knowledge of the environment. In many real-world deployment scenarios, such information may be incomplete or unavailable. To address these limitations, future work will extend the framework in two directions. First, we will evaluate graph edges using experimentally measured performance rather than numerically computed models. Second, we will explicitly introduce uncertainty in contact switching and map this uncertainty to finite-temperature effects in the corresponding Ising model, enabling principled reasoning under environmental and actuation uncertainty.

Finally, our model provides physical insight into robophysical systems (simple robots designed to test fundamental locomotion principles). An open question remains as to how these physics-based principles can be scaled to more complex, full-scale robotic platforms. We posit that physics-informed gaits derived from our framework can serve as effective priors or initialization seeds for learning-based controllers, such as reinforcement learning. This strategy would enable robots to adapt to environmental variability while retaining a physically grounded structure. This hybrid approach offers a promising pathway toward an advanced hexapod and, more broadly, highly redundant robots that can outperform their few-legged counterparts.

\bibliographystyle{plainnat}
\bibliography{references}

\end{document}